\documentclass[letterpaper]{article} 
\usepackage[preprint]{aaai2027}  
\usepackage[hyphens]{url}  
\usepackage{graphicx} 
\usepackage{natbib}  
\usepackage{caption} 
\usepackage{algorithm}
\usepackage{algorithmic}

\usepackage{newfloat}
\usepackage{listings}
\DeclareCaptionStyle{ruled}{labelfont=normalfont,labelsep=colon,strut=off} 
\floatstyle{ruled}
\newfloat{listing}{tb}{lst}{}
\floatname{listing}{Listing}

\usepackage{booktabs}

\usepackage{graphicx} 
\usepackage{mathtools}
\usepackage{amsmath}
\usepackage{amssymb}
\usepackage{amsthm}
\usepackage{makecell}

\theoremstyle{definition}
\newtheorem{theorem}{Theorem}
\newtheorem{definition}{Definition}

\title{AI Safety Considerations for Agents With Limited Time to Act}
\author{
    Leo Zeitler \textsuperscript{\rm 1} \corresponding,
    Jack Richings \textsuperscript{\rm 1}, 
    Victoria Nockles \textsuperscript{\rm 1}
}

\affiliations{
    \textsuperscript{\rm 1}The Alan Turing Institute\\

    96 Euston Road \\ 
    London, NW1 2DB, United Kingdom\\
    \corresponding lzeitler@turing.ac.uk
}

\begin{document}
\maketitle

\begin{abstract}
In the wake of the increasingly public discussion about AI alignment, recent work has tried to propose specific AI architectures that behave safely. However, the proposed arguments that seemingly demonstrate proved alignment mostly neglect the environment the agent needs to act in. We discuss theoretical bounds for agent-agnostic safety guarantees in environments that can only be partially observed and within which an action is required within limited time. We introduce two realistic scenarios, one with an infinite state space and one with signal mixture. In these scenarios, we prove that even a perfect agent cannot guarantee safe behaviour. It will be argued that for any proof of AI safety or alignment, the environment and associated safe actions need to be specifically considered together with the agent. 
\end{abstract}

\section{Introduction}
Providing safe AI solutions is pivotal when using autonomous agents in many domains, for example defence, but proving safety is notoriously difficult. The impossibility of general safety guarantees has been previously discussed and proved \cite{rice1953classes, de2024undecidability, yao2025alignment, nourizadeh2025no}. In spite of this, \textit{guaranteed safe AI}, which includes a probabilistic world model, safety-specifications, and a verifier, was proposed as a versatile and safe AI system \cite{dalrymple2024towards}. Even critics that proved impossibility of guaranteed general alignment claimed that alignment can be achieved when considering a specific model design or architecture \cite{de2024undecidability, nourizadeh2025no}. \cite{tegmark2023provably} argued that provably safe artificial intelligence will become possible in the future with automated software verification. To date, most work has focused on the agent itself, and real-world conditions which require an action with incomplete information and limited time are typically ignored. Most literature is concerned with the \textit{decidability} problem of the agent's alignment rather than \textit{identifiability} of the true environmental state, on which, ultimately, the alignment is based.

\par A relevant example is an agent at a dispatch center which needs to discard parcels with dangerous content after opening them based on a camera stream. The center also supplies a hospital which requires fast delivery of new medication. Any delay risks the death of patients in the hospital, and the system needs to take action before it can ask for human validation. Upon detecting a glass vial with a transparent liquid, it is impossible to determine whether the substance is a malicious toxin or a life-saving drug based on the camera stream. The safe actions that should be performed in either state are mutually distinct. AI safety cannot be guaranteed, independent of the model used for the agent.


\begin{figure*}[!htp]
    \centering
    \includegraphics[width=.9\linewidth]{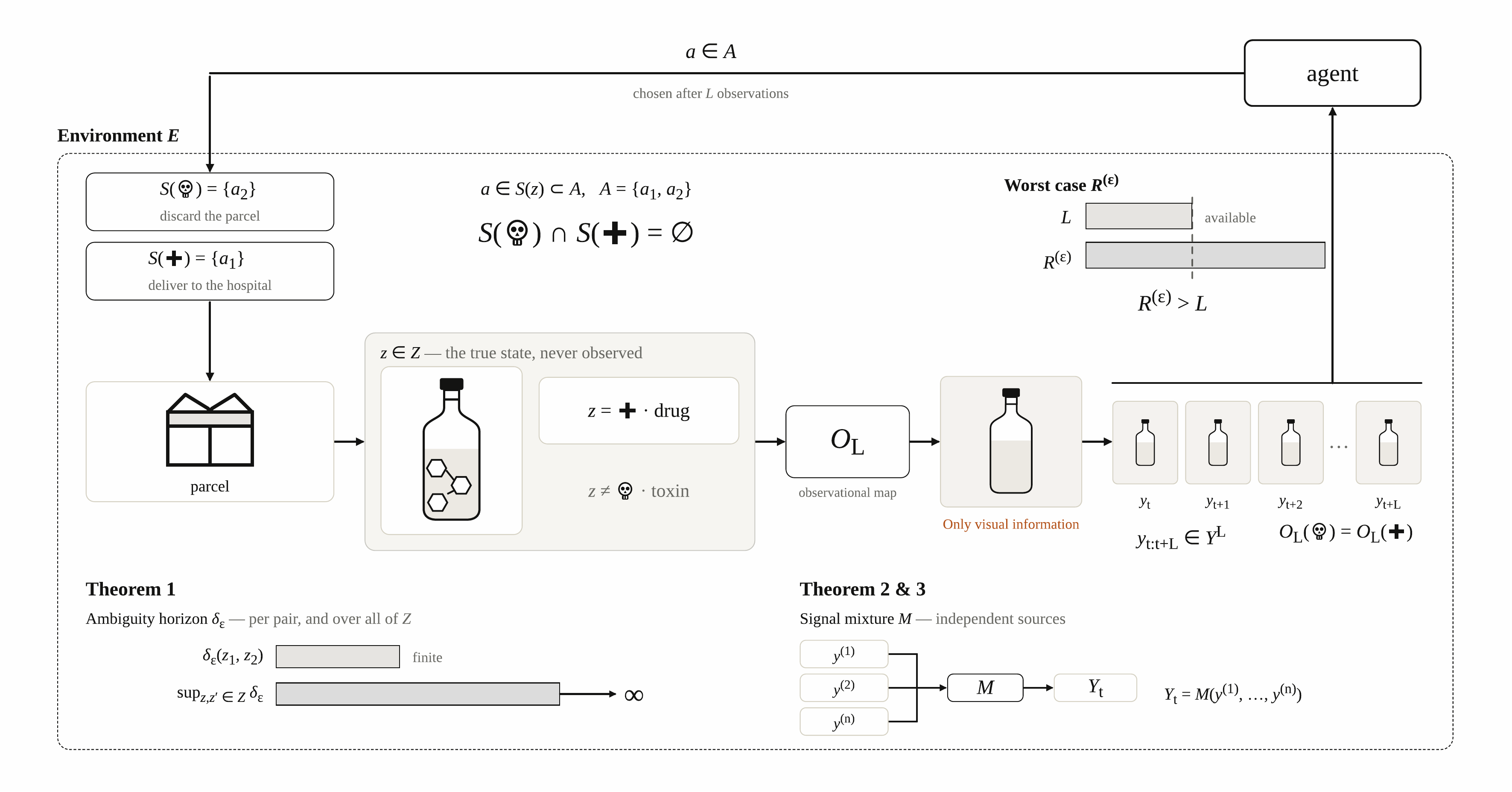}
    \caption{\textbf{Two environmental states can become observationally indistinguishable under real-world conditions.} This schema provides intuition of bounds to AI-safety claims that are independent of the used model. Consider an agent that needs to forward life-saving medicine to a hospital but is also required to discard hazardous items. The action is decided using image analysis taken from a video stream. When it finds a vial with a transparent liquid, the environmental state in which the liquid is a dangerous toxin is observationally indistinguishable from a liquid that is used for medication. The safe action in both states are mutually distinct, and no autonomous agent can guarantee safety.}
    \label{fig:hazzard-schema}
\end{figure*}

\par Here, we investigate how the action space and the environment must be structured to allow an autonomous controller to behave safely. The agent is seen as independent of the environment as well as the safe actions and the observational model, both of which are constraints imposed by the engineer to the controller. By drawing on Gold's language identification in the limit \cite{gold1967language} and the theory of infinite games \cite{fridman2011degrees, holtmann2012degrees, zimmermann2015unbounded}, this paper introduces a formal framework and discusses the problem of identifying, within limited time, the true environmental state, which is distinct from the agent's internal representation, to perform an adequate safe action. Superposition of signal sources, referred to in the field of voice recognition as the cocktail party problem \cite{cherry1953some}, will be specifically considered. Arguing that realistic scenarios can give rise to superimposed observations, we prove that distinct environmental states, which were identifiable when using independently-observable signal sources, can become non-identifiable upon signal mixture, regardless of the autonomous agent. It follows that no AI system can guarantee safety without considering explicitly the environment, the observations that can be made, and the safe action space. AI alignment and safety studies that ignore these properties are fundamentally flawed. Rather than claiming completeness, the paper aims to provide formal examples where AI safety requirements are independent of the applied agent. An overview over the formalism is given in Table \ref{table:abbr}.

\begin{table*}[!ht]
\begin{center}
\begin{tabular}{ l | l }
\toprule
 \textbf{Symbol} & \textbf{Explanation} \\ 
 \hline
 $E$ & Partially-observable environment \\ 
  $L$ & \makecell[l]{Latency, number of observations before an action \\ is required. It is a quantity imposed by the engineer or the environment}  \\
  $a \in \mathcal{A}$ & Action space \\
  $y_{t:t+L} \in \mathcal{Y}^L$ & Observations made during latency \\
  $z\in \mathcal{Z}$ & Environmental state space \\
  $O_L$ & \makecell[l]{Observational map, \\ observations that can be made from an environmental state} \\
  $a \in \mathcal{S}(z) \subset \mathcal{A}$ & Safe actions associated to an environmental state \\
  $\epsilon$ & Probability threshold with which the system is allowed to error \\
  $\delta_{\epsilon}$ & \makecell[l]{Ambiguity horizon, the number of observations \\ needed to distinguish any two environmental states} \\
  $\mathcal{V}^{\epsilon}_k$ & \makecell[l]{Set of environmental states that \\ require more than $k$ observations before being distinguishable} \\
  $\hat{\mathcal{V}}^{\epsilon}_k$ & \makecell[l]{Agent's representation of the set of environmental states \\ that require more than $k$ observations before being \\ distinguishable} \\ 
  $R^{(\epsilon)}$ & \makecell[l]{Worst-case scenario of the number of observations \\ necessary before the agent can identify a safe action} \\
  $Y_t$ & Vector of observations from independent signal sources \\
  $M$ & Mixture operator that causes superposition. \\
  \bottomrule
\end{tabular}
\end{center}
\caption{Mathematical symbols and concepts used throughout the paper.}
\label{table:abbr}
\end{table*}

\section{Results}
Let $E$ denote the environment in which an artificial controller acts. Then, define a controller as latency-$L$ if its action $a_t \in \mathcal{A}$ at time point $t$ is a function $\pi_L\left(y_{t:t+L}\right)$ of observations $y_{t:t+L} \in \mathcal{Y}^L$, meaning that the system can make maximally $L$ more observations before taking an action. Time is quantized, such that there is an observation at each time point $t$. It is assumed that $E$ is in a state $z \in \mathcal{Z}$ that can only be partially observed by the controller and is therefore hidden. Let $O_L: \mathcal{Z} \longrightarrow \mathcal{Y}^L$ be the observational map, such that $O_L(z) = y_{t:t+L}$ are the observations for a latent state $z$ which are perceived by the agent before selecting an action $a_t$. $O_L$ defines the observable part of the environment, i.e. the input to the agent. If there are $z, z' \in \mathcal{Z}$ with $O_L(z) = O_L(z')$ while the required actions are $a(z) \neq a(z')$, then no deterministic latency-$L$ controller can be correct on both $z$ and $z'$.

\par This can be easily extended to a safety framework. Let $\mathcal{S}(z) \subset \mathcal{A}$ be a set of safe actions when the environment is in state $z$. 
If two latent states $z, z'$ become observationally indistinguishable with $O_L(z) = O_L(z')$ but $\mathcal{S}(z) \cap \mathcal{S}(z') = \emptyset$, then no latency-$L$ controller can guarantee safety for both $z$ and $z'$.

\par We claim that observational indistinguishability exists in open environments, which are here defined as settings that are only partially observable by the agent. $L$ governs the tradeoff between reliability and performance. $L$ should be minimised for the latency-$L$ controller to be efficiently operational in the environment, but it must be large enough to minimise the occurrence of observational indistinguishability. The value of $L$ is an engineering choice which requires a measurable quantity.

\par The ambiguity distance between $z, z' \in \mathcal{Z}$ will be defined as $\delta_{\epsilon}(z, z') = \inf \{\lambda | \text{Pr}\left[O_{\lambda}(z) \neq O_{\lambda}(z')\right] \geq 1- \epsilon\}$, i.e. the latency $L'$ after which $z$ becomes distinguishable from $z'$ with probability $1 - \epsilon$. Note that whilst $O_L$ is deterministic, the environment might not be, justifying the stochastic definition of $\delta_\epsilon$. In the following, only deterministic environments will be considered except stated otherwise, and the subscript $\epsilon$ will be dropped.

\par This allows formulating the following objective for smart systems with safety requirements. Introduce $\mathcal{V}_{\lambda}(y_{t:t + \lambda}) = \{z \in \mathcal{Z} | O_\lambda(z) = y_{t:t + \lambda}\}$ the set of all environmental states that lead to the same $\lambda$ observations. Note that this property is fundamentally limited by the observational map $O_L$, and it is independent of which AI model is used. Training the agent aims to find an internal representation $\hat{\mathcal{Z}}$ for which the equivalently defined set $\hat{\mathcal{V}}^{(\epsilon)}_{\lambda}(y_{t:t + \lambda}) = \{\hat{z} \in \hat{\mathcal{Z}} | O_\lambda(\hat{z}) = y_{t:t + \lambda}\}$ collapses in a sequence as quickly as possible with $\hat{\mathcal{V}}_0 \supseteq \hat{\mathcal{V}}_1 \supseteq \hat{\mathcal{V}}_2 ...\supseteq \hat{\mathcal{V}}_L$. Note that this is bounded by the true environmental state sequence $\mathcal{V}_k$, which is approached by $\hat{\mathcal{V}}_k$. Consequently, for an environment $E$, an observation map $O$, and safe action $\mathcal{S}$, guaranteed $\epsilon$-safety at latency-$L$ requires $\bigcap_{z \in \mathcal{V}_L} \mathcal{S}(z) \neq \emptyset$ over all observations.

\par The safety of an AI system is governed by the worst-case scenario $R^{(\epsilon)} = \sup \{\delta_{\epsilon}(z, z'): \mathcal{S}(z) \cap \mathcal{S}(z') = \emptyset\}$, which will be referred to as $\epsilon$-safety. $\epsilon$ will be omitted in the following for deterministic environments except if stated otherwise. Note that $R$ is more dependent on the observation model, rather than the type or implementation of the agent. Three cases can be identified.
\newline

\par \textbf{Safe:} When $R < L < \infty$, $\epsilon$-safety can be assured.
\par \textbf{Application-incompatible:} When $L_{available} < R < \infty$, guaranteed $\epsilon$-safety is impossible but can be gauged.$z$ is delayed identifiable.
\par \textbf{Unbound:} When $R = \infty$, no finite time horizon can guarantee $\epsilon$-safe behaviour. $z$ is non-identifiable.
\newline

This gives rise to the question of how safe actions need to be designed specific to the environment to allow $\epsilon$-safety. 

\subsection{Infinite state space can lead to unbounded delay}
It is assumed that many real-world applications for autonomous agents include an infinite number of environmental states, i.e. $|\mathcal{Z}| = \infty$. This section describes properties of the safe action space to allow $\epsilon$-safety by demonstrating that there exist constructions of the maximum ambiguity $\delta$, which is dependent on the observational map, for which the maximum over the state space $\mathcal{Z}$ is unbound. For this, $\mathcal{Z}$ is defined as an ordered structure

\begin{definition}
    $(\mathcal{Z}, \gamma)$ is an ordered structure, with $\mathcal{Z}$ the set of environmental states, and the relationship $\gamma(z, z')$ indicates $\sup \{\sup_{z'' \in \mathcal{Z}} \delta(z, z''), \sup_{z'' \in \mathcal{Z}} \delta(z', z'') \} = \sup_{z'' \in \mathcal{Z}} \delta(z', z'')$.
\end{definition}

\begin{theorem}
    Let $O_n : \mathcal{Z} \longrightarrow \mathcal{Y}^n$ be an observational map with $n$ observations, and the environmental latent state space $\mathcal{Z}$ is infinite. If there exists a subset $\bar{\mathcal{Z}} \subseteq \mathcal{Z}$, for which $(\bar{\mathcal{Z}}, \gamma)$ is well defined, and for which there is a isomorphic map $\phi: (\bar{\mathcal{Z}}, \gamma)\longrightarrow(\mathbb{N}, <)$, then every pair $\bar{z}, \bar{z}' \in \mathcal{Z}'$ has finite ambiguity distances $\delta(\bar{z}, \bar{z}') < \infty$, but $\sup \delta(\bar{z},\bar{z}') = \infty$, and consequently, $\sup \delta(z, z') = \infty$.
    \label{theorem:infinite}
\end{theorem}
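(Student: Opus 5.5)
The plan is to reduce everything to the order structure transported by $\phi$. Write $D(z) = \sup_{z'' \in \mathcal{Z}} \delta(z, z'')$ for the maximal ambiguity of a state. By the definition of $\gamma$, $\gamma(z,z')$ holds exactly when $\max\{D(z), D(z')\} = D(z')$, i.e. when $D(z) \le D(z')$. So $\gamma$ is the preorder induced by $D$.

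Since $\phi$ is an order isomorphism onto $(\mathbb{N},<)$, two things follow. First, $\bar{\mathcal{Z}}$ is countably infinite. Second, $\gamma$ must be strict on distinct elements. I will read the isomorphism as saying that $\bar z \mapsto D(\bar z)$ is strictly increasing along the enumeration $\bar z_n = \phi^{-1}(n)$, so that $D(\bar z_0) < D(\bar z_1) < \cdots$.

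\textbf{Step 1: finiteness of each pair.} For a fixed pair $\bar z, \bar z'$, I would argue that $\delta(\bar z,\bar z') < \infty$ as part of the hypothesis that $(\bar{\mathcal{Z}},\gamma)$ is well defined. If some $D(\bar z_n)$ were $\infty$, then every $\bar z_m$ with $m>n$ would need $D(\bar z_m) > \infty$, which is impossible. Hence $D(\bar z_n) < \infty$ for all $n$, and therefore every pair has finite ambiguity distance, since $\delta(\bar z,\bar z') \le D(\bar z) < \infty$.

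\textbf{Step 2: unboundedness.} The values $\delta$ are latencies, so they lie in $\mathbb{N}$, or at least in a discrete set of integer observation counts. A strictly increasing sequence $D(\bar z_0) < D(\bar z_1) < \cdots$ of natural numbers therefore satisfies $D(\bar z_n) \ge n$, and so $\sup_n D(\bar z_n) = \infty$. Each $D(\bar z_n)$ is a supremum of $\delta(\bar z_n, \cdot)$. Hence for every $N$ there is a pair with $\delta(\bar z_n, z'') > N$, which gives $\sup \delta = \infty$. Because $\bar{\mathcal{Z}} \subseteq \mathcal{Z}$, the supremum over $\mathcal{Z}\times\mathcal{Z}$ dominates it, and $\sup_{z,z'} \delta(z,z') = \infty$ follows immediately.

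\textbf{Main obstacle.} The hard part is a gap between Step 1 and Step 2. $D(\bar z_n)$ is a supremum over all of $\mathcal{Z}$, not over $\bar{\mathcal{Z}}$, so Step 2 only directly gives $\sup \delta(\bar z, z'') = \infty$ with $z''\in\mathcal{Z}$. Getting the statement with both points in $\bar{\mathcal{Z}}$ needs more. I would first try to strengthen the reading of $\gamma$ on $\bar{\mathcal{Z}}$ so that the suprema are taken within $\bar{\mathcal{Z}}$, which is natural if $(\bar{\mathcal{Z}},\gamma)$ is regarded as a structure in its own right. Failing that, I would note that the final consequence $\sup\delta(z,z')=\infty$ over $\mathcal{Z}$ does not need this refinement. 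The well-definedness assumption likewise carries the finiteness of individual pairs, and I would state that explicitly.
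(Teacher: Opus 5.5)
Your proposal is correct and follows the paper's route: it transports the absence of a maximal element in $(\mathbb{N},<)$ through $\phi$ to the ambiguity values. The obstacle you flag is real, and your first fix, taking the suprema that define $\gamma$ within $\bar{\mathcal{Z}}$, is exactly what the paper's proof silently adopts when it writes $\sup_{\bar{z}'' \in \bar{\mathcal{Z}}} \delta(\bar{z}, \bar{z}'')$. Your integer-valuedness step ($D(\bar z_n)\ge n$) and your derivation of pairwise finiteness from the lack of a maximum supply justifications that the paper only asserts (``by construction of $\delta$'', ``the supremum over $\mathbb{N}$ is infinity''). The integer-valuedness step is genuinely needed, since a strictly increasing but real-valued chain of suprema could stay bounded.
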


\begin{proof}
    Let $\bar{\mathcal{Z}} \subseteq \mathcal{Z}$, and $\phi: (\bar{\mathcal{Z}}, \gamma) \longrightarrow (\mathbb{N}, <)$ an isomorphism between the two ordered sets. Select $\bar{z}, \bar{z}' \in \bar{\mathcal{Z}}$ with $\gamma(\bar{z}, \bar{z}') = \sup_{\bar{z}'' \in \bar{\mathcal{Z}}} \delta(\bar{z}, \bar{z}'') = n < \infty$ by construction of $\delta$ for $\bar{\mathcal{Z}}$. Any finite subset $\hat{\mathcal{Z}}$ has $\sup_{\hat{z}, \hat{z}' \in \hat{\mathcal{Z}}} \gamma(\bar{z}, \bar{z}') = \delta(z,z') <\infty$. It is clear that any finite subset $\hat{\mathcal{Z}}$ has $\sup_{\hat{z}, \hat{z}' \in \hat{\mathcal{Z}}} \gamma(\bar{z}, \bar{z}') = \delta(z,z') <\infty$ as all $\delta(\hat{z},\hat{z}') < \infty$ for $\hat{z},\hat{z}' \in \hat{\mathcal{Z}}$. Due to the isomorphism of $\phi$, it follows that for $\bar{z} \neq \bar{z} \in \bar{\mathcal{Z}}$, if $\gamma(\bar{z}, \bar{z}')$ then $\phi(\bar{z}) < \phi(\bar{z}')$. As there exists for every $m \in \mathbb{N}$ a $n \in \mathbb{N}$ with $m < n$, it follows due to the bijectivity of the isomorphic map that $\gamma(\phi^{-1}(m), \phi^{-1}(n))$. The supremum over $\mathbb{N}$ is infinity. Since the order relationship $\gamma$ is defined over the supremum of the distinguishability of the two environmental states, it follows that $\sup_{\bar{z}, \bar{z}' \in \bar{\mathcal{Z}}} \delta(\bar{z}, \bar{z}') = \infty$,
    although every tested pair had a finite ambiguity distance.
\end{proof}

Any empirical empirical testing can only evaluate a finite number of environmental states, which corresponds to the set $\hat{\mathcal{Z}}$. $\epsilon$-safety can only be formally shown if $R$, which depends on the ambiguity distance $\delta$ of environmental states with mutually distinct safety actions, does not grow with $\delta$. However, as shown in Theorem \ref{theorem:infinite}, any testing which is based on finite sampling can provide misleading upper bounds for $\delta$. Any attempt to prove $\epsilon$-safety needs to also show that one of the following can be met:

\begin{enumerate}
    \item The set $\{z: \exists z', \mathcal{S}(z) \cap \mathcal{S}(z') = \emptyset\}$ is finite. $R$ can be explicitly computed.
    \item The injective map $\phi$ does not exist. 
\end{enumerate}

The first criterion is trivially met if $|\bigcup_{z \in \mathcal{Z}} \mathcal{S}(z)|$ is finite. Note that even if there does not exist an injective map $\phi$ you might not be able to compute $\delta$ explicitly, yet it is guaranteed that $R$ will be bounded. An example for a system for which $\phi$ exists are environments where the observation itself can only be made after a lapsed time $T$, as it is the case for sound or light. Here, $T$ increases as a function of distance, and two states $z_m$ and $z_n$ can only be distinguished after the signal arrives, which can, if the distance can be aribitrarily long, go to infinity.   

\subsection{Considerations regarding signal superposition in the environment}
Special attention is required for open environments that are also subject to superpositions of observations \cite{cherry1953some}. This can render latent states $z$ non-identifiable, as proved below. Importantly, only the case of \textit{observational signal mixture} is considered here, and the superposition of \textit{latent environmental states} $z$ and $z'$ is ignored. Let $Y_t = \left(y_t^{(1)}, ..., y_t^{(k)}\right)$ be the vector of $k$ clean observations from $k$ signal sources $s_k$. Let $M$ be the mixture operator, such that $\bar{Y}_t = M(Y_t)$ is the superimposed signal. 

\begin{theorem}
    Consider an observation map $O^{clean}_L$ when all $k$ signal sources are separately observable, and a vector of clean signal observations $Y_{t:t+L}= \left(y_{t:t+L}^{(1)}, ..., y_{t:t+L}^{(k)}\right)$. There exist environments $E$ and non-injective mixture operators $M$ on $z_1$ and $z_2$ in $\mathcal{Z}$ for which $\delta_{clean}(z_1, z_2) < \infty$, such that the superposition observation $\bar{Y}_{0:t+L} = M \circ O^{clean}_L(z_1) = M \circ O^{clean}_L(z_2)$ for all $L$. It follows that $\delta_{mixture}(z_1, z_2) = \infty$.
    \label{theorem:noninjective}
\end{theorem}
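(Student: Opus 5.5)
The statement is existential, so the plan is to give one explicit environment $E$ and one mixture operator $M$, and then check the three claims directly from the definitions of $O_L$ and $\delta$.

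\textbf{Construction.} Take $k=2$ signal sources and real-valued observations, $\mathcal{Y}=\mathbb{R}$. Let the mixture operator be the additive superposition
\[
M\bigl(y^{(1)}_t,y^{(2)}_t\bigr)=y^{(1)}_t+y^{(2)}_t,
\]
which is the standard cocktail-party model and is plainly non-injective. Choose two latent states that swap the roles of the sources:
\begin{itemize}
\item in $z_1$, source 1 emits a fixed non-constant sequence $a_t$ (for instance $a_t=1$ for all $t$) and source 2 emits $b_t$ (for instance $b_t=0$);
\item in $z_2$, source 1 emits $b_t$ and source 2 emits $a_t$.
\end{itemize}
To connect this with the safety discussion, let $z_1$ mean ``the toxin is at source 1'' and $z_2$ mean ``the toxin is at source 2''. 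Assign $\mathcal{S}(z_1)=\{\text{discard item 1}\}$ and $\mathcal{S}(z_2)=\{\text{discard item 2}\}$, so that $\mathcal{S}(z_1)\cap\mathcal{S}(z_2)=\emptyset$.

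\textbf{Step 1: the clean distance is finite.} Under $O^{clean}_L$, the first clean observation already differs: $Y_0(z_1)=(a_0,b_0)\neq(b_0,a_0)=Y_0(z_2)$ because $a_0\neq b_0$. The environment is deterministic, so $\Pr[O^{clean}_\lambda(z_1)\neq O^{clean}_\lambda(z_2)]=1$ for every $\lambda\ge 1$. Hence $\delta_{clean}(z_1,z_2)=1<\infty$.

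\textbf{Step 2: the mixed observations coincide.} By commutativity of addition, $a_t+b_t=b_t+a_t$ at every $t$. Therefore $M\circ O^{clean}_L(z_1)=M\circ O^{clean}_L(z_2)$ for every $L$, which is exactly the equality required in the theorem.

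\textbf{Step 3: the mixture distance is infinite.} Write $O^{mix}_\lambda:=M\circ O^{clean}_\lambda$. By Step 2, $\Pr[O^{mix}_\lambda(z_1)\neq O^{mix}_\lambda(z_2)]=0<1-\epsilon$ for every $\lambda$ and every $\epsilon<1$. The set over which the infimum in $\delta$ is taken is therefore empty, and by the usual convention $\inf\emptyset=\infty$. This gives $\delta_{mixture}(z_1,z_2)=\infty$.

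\textbf{Safety consequence.} The pair $(z_1,z_2)$ has disjoint safe-action sets, so it enters the supremum defining $R$, and hence $R=\infty$. This places the mixed system in the \emph{unbound} case. By the indistinguishability argument given earlier, no latency-$L$ controller can be safe on both $z_1$ and $z_2$, for any $L$.

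\textbf{Main obstacle.} The computations are trivial; the delicate part is the formal setup. The statement is loose about $M$ acting ``on $z_1$ and $z_2$''. I would read $M$ as acting pointwise in time on the observation vector $Y_t$, and make explicit that the infimum of the empty set is $\infty$. I would also remark that any non-injective $M$ gives the same conclusion: pick two clean trajectories $Y\neq Y'$ with $M(Y)=M(Y')$ and realise them as the outputs of two latent states.
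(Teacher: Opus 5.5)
Your proof is correct and follows essentially the same route as the paper: an explicit example with additive mixing $M$, in which the clean source vectors of $z_1$ and $z_2$ differ but their sums coincide at every time step, so the infimum defining $\delta_{mixture}$ is taken over the empty set. The paper gets the collision from two sources emitting single-index vectors at $i$ and $j$ versus one source emitting the vector with ones at both $i$ and $j$, rather than from your swap of sources, which is an inessential difference; note only that your example $a_t=1$ is constant, so the condition you actually need (and use) is $a_0\neq b_0$, not non-constancy.
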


\begin{proof}
    The theorem is proved by example. Consider an environment where the incoming observations can be described by a $d$-dimensional binary vector $Y_t \in \{0, 1\}^d$ $d > 2$. Let there be three observations $y_{1...3}$: $y_1$ is zero everywhere except at index $i$, $y_2$ is zero everywhere except at index $j$ with $i \neq j$, and $y_3$ is zero everywhere except at $i$ and $j$. $y_1$ and $y_2$ correspond to the latent state $z_{1}$, and $y_3$ corresponds to $z_2$ with  $z_{1, 2} \in \mathcal{Z}$. It is assumed that observed signals $y_{1, ..., 3}$ are generated by the signal sources $s_{1, ..., 3}$. If no signal is generated by a source $s_i$, it emits the zero vector $\mathbf{0}$ that only contains zeros. Note that $z_1$ contains two signal sources, whereas $z_2$ contains only one. Assume first no signal mixture, and every signal is individually observable with $Y_t = (s_1, s_2, s_3)$. When the latent state is $z_1$, the observed signal is $Y_{t}^{(1)} = (s_1=y_1, s_2=y_2, s_3=\mathbf{0})$, whereas for $z_2$, the observation is $Y_{t}^{(2)} = (s_1=\mathbf{0}, s_2=\mathbf{0}, s_3=y_3)$. As every signal is cleanly observable, $Y_{t}^{(1)} \neq Y_{t}^{(2)}$ and $\delta(z_1, z_2) = 0 < \infty$.
    
    Now, set the mixture operator $M$ to be the arithmetic addition over the passed signals. When the latent state is $z_1$, the observed signal is $\bar{Y}_{t}^{(1)} = M \circ Y_t(s_1=y_1, s_2=y_2, s_3=\mathbf{0})$, which is 1 at $i$ and $j$ and zero otherwise. Similarly, for $z_2$, the agent observes $\bar{Y}_{t}^{(2)} = M \circ Y_t(s_1=\mathbf{0}, s_2=\mathbf{0}, s_3=y_3)$, which is, by definition, 1 at $i$ and $j$ and zero otherwise. It follows that $M$ is non-injective, as $\bar{Y}_{t}^{(1)} = \bar{Y}_{t}^{(2)}$ for all $t$, and $\delta(z_1, z_2) = \infty$. 
\end{proof}

This allows also formulating a similar theorem for which the entire $\epsilon$-safety assessment of an agent changes.

\begin{theorem}
    There exist environments $E$ for which the clean signal observation has a finite ambiguity horizon $R_{clean} < \infty$, but for which a non-injective mixture operator $M$ on a subset of admissible source configurations generates an ambiguity horizon $R_{mixture} = \infty$.
    \label{theorem:horizon}
\end{theorem}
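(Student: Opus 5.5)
We prove Theorem \ref{theorem:horizon} by example, reusing the construction from the proof of Theorem \ref{theorem:noninjective} and adding a safe-action structure so that the two states that collide under mixture must have disjoint safe sets. The plan has three steps: define a small finite environment, show $R_{clean} < \infty$, and show $R_{mixture} = \infty$ via the collision.

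\textbf{Step 1: the environment.} Take a deterministic environment with the two latent states $z_1, z_2$ of the earlier proof, so $\mathcal{Z} = \{z_1, z_2\}$. The sources $s_1, s_2, s_3$ emit the $d$-dimensional binary vectors $y_1, y_2, y_3$ at every time step, exactly as described there. Fix an action space $\mathcal{A} = \{a_1, a_2\}$ with $\mathcal{S}(z_1) = \{a_1\}$ and $\mathcal{S}(z_2) = \{a_2\}$. In the parcel example, these are ``forward'' versus ``discard''. By construction $\mathcal{S}(z_1) \cap \mathcal{S}(z_2) = \emptyset$. The admissible source configurations are $(y_1, y_2, \mathbf{0})$ and $(\mathbf{0}, \mathbf{0}, y_3)$, and $M$ is arithmetic addition restricted to this subset.

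\textbf{Step 2: the clean horizon is finite.} Since $\mathcal{Z}$ is finite, the supremum in $R$ ranges over a single unordered pair. For the clean observation map, the earlier proof already gives $Y^{(1)}_t \neq Y^{(2)}_t$ at the first time step, so $\delta_{clean}(z_1, z_2) = 0$. Hence $R_{clean} = 0 < \infty$, which is the first criterion listed after Theorem \ref{theorem:infinite}.

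\textbf{Step 3: the mixed horizon is infinite.} Invoke Theorem \ref{theorem:noninjective}: we have $M \circ O^{clean}_L(z_1) = M \circ O^{clean}_L(z_2)$ for every $L$. Therefore $\Pr[O_\lambda(z_1) \neq O_\lambda(z_2)] = 0$ for all $\lambda$, the infimum in the definition of $\delta$ is over the empty set, and $\delta_{mixture}(z_1, z_2) = \infty$. Because this pair has disjoint safe sets, it enters the supremum defining $R$, giving $R_{mixture} = \infty$. For any $L$ we then have $\mathcal{V}_L = \{z_1, z_2\}$ and $\bigcap_{z \in \mathcal{V}_L} \mathcal{S}(z) = \emptyset$, so no latency-$L$ controller is safe.

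The main obstacle is conceptual rather than computational. The theorem must be read so that safety is non-vacuous in the clean case but fails after mixing, and this requires the colliding pair to have disjoint safe sets. The earlier theorem does not specify safe sets at all, so the key step is to choose $\mathcal{S}$ explicitly as above. A secondary point is the convention $\inf \emptyset = \infty$ in the definition of $\delta$, which I will state explicitly. Finally, the phrase ``on a subset of admissible source configurations'' needs care: it can be handled either by taking $\mathcal{Z}$ larger and embedding the pair, since $R$ is a supremum and one infinite term suffices, or simply by restricting to the two configurations.
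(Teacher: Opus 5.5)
Your proposal is correct and follows the paper's proof: the paper likewise reuses the example of Theorem~\ref{theorem:noninjective}, takes $\mathcal{Z}=\{z_1,z_2\}$ with $\mathcal{S}(z_1)\cap\mathcal{S}(z_2)=\emptyset$, and reads off $R_{clean}=0$ and $R_{mixture}=\infty$. Your explicit nonempty singleton safe sets and the stated convention $\inf\emptyset=\infty$ are useful extra detail, since an empty $\mathcal{S}(z)$ would put the diagonal pair $(z,z)$ into the supremum and make $R_{clean}$ infinite. Your side remark about embedding the pair in a larger $\mathcal{Z}$ would also need a bound on the clean ambiguity of every other pair with disjoint safe sets.
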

    
\begin{proof}
    The proof is identical to \ref{theorem:noninjective}, except that the entire latent state space is $\mathcal{Z} = \{z_1, z_2\}$ with $\mathcal{S}(z_1) \cap \mathcal{S}(z_2) = \emptyset$. It follows that $R_{clean} = 0 < \infty$, whereas $R_{mixture} = \infty$.
\end{proof}

Note that the superposition is a function of the observable signal sources in the environment, and it is therefore dependent on what the agent can perceive. It does not rely on a specific AI model. Moreover, non-identifiability can arise even under linear superposition. It follows that there can be no $\epsilon$-safety guarantee for any latency-$L$ controller in environments with signal mixture.

\section{Discussion}
This paper proves that general guarantees of $\epsilon$-safe systems for latency-$L$ controllers are fundamentally flawed when the environment, observational model, and safe actions are not explicitly considered.  Importantly, our proofs rely on the observational model of the environment, and it is independent of the choice of agent model or training. The following section discusses the results of this paper in context of the literature.

\par The superposition problem is widely known in various fields, including speech recognition \cite{cherry1953some}, radar sensing \cite{wiley2006elint, gunn2025radar}, and genetics \cite{zeitler2024next}. It therefore represents a realistic problem for autonomous agents. Theorem \ref{theorem:noninjective} and \ref{theorem:horizon} prove that $\epsilon$-safety cannot be guaranteed, completely independent of the agent, its design, and training.  

\par The results presented here are motivated by Gold's language identification in the limit \cite{gold1967language}. Gold defines collapsing uncertainty over discrete time observations. Moreover, the theoretical bounds in both works are dependent on the structure of the environment and the observational model rather than the agent itself. However, the results presented here address a different problem. The $\epsilon$-safety results require an action by an agent within an $L$-bound time horizon based on an incompletely-observable latent state $z$, and observations $Y_t$ from multiple signal sources can be superimposed through a mixture operator $M$. In contrast, \cite{gold1967language} uses language theory to prove that, with an infinite time horizon as well as clean and separate observations, negative instances provided by an informant are required to identify any super-finite class of languages, unless a restrictive, computable ordering is imposed.

\par The formal construction presented here draws on the theory of infinite games with bounded $L$-delay \cite{fridman2011degrees, holtmann2012degrees, zimmermann2015unbounded}. More precisely, when defining an input and output player $A$ and $B$, an infinite game can be seen as a bit sequence $a_0b_0a_1b_1...$, where the individual sequences $a_0a_1... \in \Sigma_A^\omega$ and $b_i \in \Sigma_B^\omega$ are infinite words. \cite{holtmann2012degrees} investigated $\omega$-regular winning conditions and proved that the upper bound for deciding whether a strategy is winning grows exponentially with environmental states and safe action space (the vocabulary was adapted to be in line with the formalism of this paper). \cite{fridman2011degrees} showed that context-free winning conditions can require unbound lookahead. In contrast, Theorem \ref{theorem:infinite} presented in this paper proves that a finite lookahead horizon can break down when considering an infinite environmental state space, even if every pair of latent states $z, z' \in \mathcal{Z}$ have finite non-identifiablility. Whilst also demonstrating infinite bounds, it does not draw on the formal theory of infinite games, and it models real-world conditions which are only partially observable and which can therefore induce a non-identifiability problem. It is independent of the formulation of the winning condition, and the safe space must be explicitly structured to account for unbounded $R$. Furthermore, Theorems \ref{theorem:noninjective} and \ref{theorem:horizon} prove that non-injective mixture operators $M$ can generate infinitely-long observational indistinguishability even when $|\mathcal{Z}| < \infty$. Establishing a finite upper bound is impossible. Signal superposition has not been addressed in the context of infinite games.

\par Guaranteed safe AI was previously proposed as a combination of a world model, a set or distribution of safety specifications, and a verifier \cite{dalrymple2024towards}. Although the world model is formally conditioned on observations, the formalisms in \cite{dalrymple2024towards} do not contain any specifications of the observational map $O_L$ or the environment, nor any time horizon within which the agent must act. We argue that when the environment is only partially observable, there can be states $z$ and $z'$ with associated safe states $\mathcal{S}(z)$ and $\mathcal{S}(z')$ such that $O_L(z) = O_L(z')$. Any $\epsilon$-safety guarantee of a system needs to prove $\mathcal{S}(z) \cap \mathcal{S}(z') \neq \emptyset$ for all $z, z' \in \mathcal{V}_L$ and for all possible observations $y_{t:t+L}$. If $\mathcal{S}(z) \cap \mathcal{S}(z') = \emptyset$, no latency-$L$ controller can guarantee $\epsilon$-safety, even if the agent is equipped with an optimal world model and verifier. $\epsilon$-safety is mostly dependent on the design of the safety specifications $\mathcal{S}(z)$ which require to be non-empty over $\mathcal{V}_L$, a specification which can be independent of the agent.

\par Theorem \ref{theorem:infinite} has been stated in a similar way for infinite-dimensional environmental (world) vectors \cite{yao2025alignment}. However, the present paper puts the emphasis on the number of states rather than the dimensionality of the vector. Moreover, the paradox cited in \cite{yao2025alignment} argues that the rules required for analysing a complex environment cannot be enumerated, from which follows that alignment cannot be guaranteed, and it remains oblivious of environments were this can be done. The present paper makes the stronger argument that $\epsilon$-safety cannot be guaranteed as a function of the environmental latent space and the observations, and it is independent of the rule set, agent, or the task.

\section{Conclusions}
Instead of focusing on the AI agent, this paper formally proves that safety for autonomous systems, that need to act within a limited time horizon can only guaranteed when considering explicitly the partially-observable environment together with the safe states. In particular, non-injective signal mixture of independent signal sources can require infinite lookahead. The paper contributes to the formal assessment of AI systems and their alignment under real-world conditions.

\section{Acknowledgments}
This work was supported by the Turing’s Defence and Security programme through a partnership with the UK government in accordance with the framework agreement between HMG and The Alan Turing Institute. Claude Opus 5 and ChatGPT-5.6 Luna were used for developing the framework. Figure 1 was generated using Claude Opus 5. We thank Varun Chhabra for their insightful comments and corrections.

\bibliography{main}
\end{document}